\newcommand{\nb}[2]{\fbox{\bfseries\small\sffamily#1}{\sf\small$\triangleright$\textbf{#2}$\triangleleft$}}}
\newcommand{\nb}[2]{}}
\title{On the Use of Non-Stationary Policies for Stationary Infinite-Horizon Markov Decision Processes}
\author{
Bruno Scherrer \\
Inria, Villers-l\`es-Nancy, F-54600, France \\
\texttt{bruno.scherrer@inria.fr} \\
\And
Boris Lesner \\
Inria, Villers-l\`es-Nancy, F-54600, France \\
\texttt{boris.lesner@inria.fr} 
}
\newtheorem{thm}{Theorem}
\newtheorem{lemma}{Lemma}
\newtheorem{exmp}{Example}
\newcommand{\greedy}[1]{\ensuremath{\mathcal G(#1)}}
\newcommand{\norm}[1]{\ensuremath{{\lVert #1 \rVert}}}
\newcommand{\len}[1]{\ensuremath{m}}
\begin{document}

\maketitle

\begin{abstract}
  We consider infinite-horizon stationary $\gamma$-discounted Markov
  Decision Processes, for which it is known that there exists a
  stationary optimal policy. Using Value and Policy Iteration with
  some error $\epsilon$ at each iteration, it is well-known that one
  can compute stationary policies that are
  $\frac{2\gamma}{(1-\gamma)^2}\epsilon$-optimal. After arguing that
  this guarantee is tight, we develop variations of Value and Policy
  Iteration for computing non-stationary policies that can be up to
  $\frac{2\gamma}{1-\gamma}\epsilon$-optimal, which constitutes a
  significant improvement in the usual situation when $\gamma$ is
  close to $1$. Surprisingly, this shows that the problem of
  ``computing near-optimal non-stationary policies'' is much simpler
  than that of ``computing near-optimal stationary policies''.
\end{abstract}

\section{Introduction}

Given an infinite-horizon stationary $\gamma$-discounted Markov Decision Process~\cite{puterman,ndp}, we consider
approximate versions of the standard Dynamic Programming algorithms, Policy and Value Iteration, that build sequences of value functions $v_k$ and
policies $\pi_k$ as follows
\begin{align}
\mbox{Approximate Value Iteration (AVI):~~~~~}& ~~~~~~v_{k+1} ~ \gets~ T v_k + \epsilon_{k+1}  \label{avi} \\
 \mbox{Approximate Policy Iteration (API):~~~~~} & 
\left\{\begin{array}{rcl}
v_{k} &\gets &v_{\pi_{k}} + \epsilon_{k} \\
\pi_{k+1} &\gets &\mbox{any element of } \greedy{v_k}
\end{array}\right.
\label{api}
\end{align}
where $v_0$ and $\pi_0$ are arbitrary,  $T$ is the Bellman optimality operator, $v_{\pi_{k}}$ is the value of policy
$\pi_k$ and $\greedy{v_k}$ is the set of policies that are
greedy with respect to $v_k$. At each iteration $k$, the term $\epsilon_k$ accounts for a possible approximation of the Bellman operator (for AVI) or the
evaluation of $v_{\pi_k}$ (for API). Throughout the paper, we will assume that error terms $\epsilon_k$ satisfy for all $k$,
$\norm{\epsilon_k}_\infty\leq \epsilon$ for some $\epsilon \ge 0$. 
Under this assumption, it is well-known that both algorithms share the
following performance bound (see \cite{singh94,Gordon95,ndp} for AVI and \cite{ndp} for API):
\begin{thm}
\label{thm:classic-bound}
For API (resp. AVI), the \emph{loss} due to running policy $\pi_k$  (resp. any policy $\pi_k$ in $\greedy{v_{k-1}}$) instead of the optimal policy $\pi_*$ satisfies
\[
\limsup_{k\to \infty}\norm{v_* - v_{\pi_k}}_\infty \leq \frac {2\gamma}{(1-\gamma)^2}\epsilon.
\]
\end{thm}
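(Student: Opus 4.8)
The plan is to handle AVI and API separately, in both cases working only with the operators $T$ and $\tpi{k}$ (writing $\tpi{k}x=r_{\pi_k}+\gamma P_{\pi_k}x$, where $P_{\pi_k}$ is the stochastic transition matrix of $\pi_k$), and using repeatedly the three elementary facts that $T$ and every $\tpi{k}$ are $\gamma$-contractions for $\norm{\cdot}_\infty$, that $v_*=Tv_*=\tpi{*}v_*$ with $v_*\ge v_{\pi}$ componentwise for every policy $\pi$, and that $\tpi{k}$ and $P_{\pi_k}$ are monotone with $P_{\pi_k}\mathbf{1}=\mathbf{1}$. I would first state and use a bound on the distance of the computed objects to the optimal value, and then convert that into a bound on the actual loss $v_*-v_{\pi_k}$.

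For \textbf{AVI}, I would first control $\norm{v_k-v_*}_\infty$: from $v_k=Tv_{k-1}+\epsilon_k$ and $v_*=Tv_*$ we get $\norm{v_k-v_*}_\infty\le\gamma\norm{v_{k-1}-v_*}_\infty+\epsilon$, hence $\limsup_k\norm{v_k-v_*}_\infty\le\tfrac{\epsilon}{1-\gamma}$. Then I would relate the loss of $\pi_k\in\greedy{v_{k-1}}$ to this quantity by writing $v_*-v_{\pi_k}=Tv_*-\tpi{k}v_{\pi_k}=(Tv_*-Tv_{k-1})+(\tpi{k}v_{k-1}-\tpi{k}v_{\pi_k})$, where the middle cancellation uses $Tv_{k-1}=\tpi{k}v_{k-1}$ (greediness); bounding each parenthesis by $\gamma$ times a norm and applying the triangle inequality yields $(1-\gamma)\norm{v_*-v_{\pi_k}}_\infty\le 2\gamma\norm{v_*-v_{k-1}}_\infty$. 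Combining the two estimates gives the claim for AVI.

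For \textbf{API} the same one-step idea is not enough, since it would produce a recursion on $\norm{v_*-v_{\pi_k}}_\infty$ with coefficient $\tfrac{2\gamma}{1-\gamma}$, which need not be $<1$; the fix is to argue at the level of componentwise vector inequalities on the nonnegative loss $g_k:=v_*-v_{\pi_k}\ge 0$, keeping the harmless averaging operator $\gamma P_{\pi_*}$ explicit instead of collapsing to sup-norm prematurely. I would assemble three ingredients: (i) an \emph{approximate policy improvement} bound $v_{\pi_{k+1}}\ge v_{\pi_k}-\tfrac{2\gamma}{1-\gamma}\epsilon\,\mathbf{1}$, obtained by first noting $\tpi{k+1}v_{\pi_k}\ge v_{\pi_k}-2\gamma\epsilon\,\mathbf{1}$ (from $\tpi{k+1}v_k=Tv_k\ge\tpi{k}v_k$, $v_k=v_{\pi_k}+\epsilon_k$ and $\tpi{k}v_{\pi_k}=v_{\pi_k}$) and then iterating the monotone contraction $\tpi{k+1}$ towards its fixed point $v_{\pi_{k+1}}$; (ii) the exact identity $g_{k+1}=(v_*-\tpi{k+1}v_{\pi_k})+\gamma P_{\pi_{k+1}}(g_{k+1}-g_k)$ together with $v_*-\tpi{k+1}v_{\pi_k}\le\gamma P_{\pi_*}g_k+\gamma(P_{\pi_{k+1}}-P_{\pi_*})\epsilon_k$ (again from greediness and $\tpi{*}v_*=v_*$), the last error term being $\le 2\gamma\epsilon\,\mathbf{1}$ in absolute value; (iii) substituting the slack $g_{k+1}-g_k\le\tfrac{2\gamma}{1-\gamma}\epsilon\,\mathbf{1}$ from (i) into (ii) and using $2\gamma\epsilon+\tfrac{2\gamma^2}{1-\gamma}\epsilon=\tfrac{2\gamma}{1-\gamma}\epsilon$ to reach the clean componentwise recursion $g_{k+1}\le\gamma P_{\pi_*}g_k+\tfrac{2\gamma}{1-\gamma}\epsilon\,\mathbf{1}$.

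Iterating this last inequality and using $P_{\pi_*}^{\,n}\mathbf{1}=\mathbf{1}$ gives $\norm{g_k}_\infty\le\gamma^k\norm{g_0}_\infty+\tfrac{2\gamma}{(1-\gamma)^2}\epsilon$, and letting $k\to\infty$ yields the stated $\limsup$ bound. I expect the only real obstacle to be the API case: one has to resist passing to $\norm{\cdot}_\infty$ too early, and one has to identify the correct auxiliary quantity — the per-iteration policy-improvement slack of ingredient (i) — which is exactly what makes the error constants telescope to $\tfrac{2\gamma}{1-\gamma}\epsilon$ rather than blow up; the AVI case is routine by comparison.
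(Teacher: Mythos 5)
Your proof is correct, and since the paper does not prove Theorem~\ref{thm:classic-bound} itself (it quotes it from the cited literature), the right comparison is with those standard proofs, which your argument reproduces faithfully: for AVI, the contraction estimate $\limsup_k\norm{v_*-v_k}_\infty\le\epsilon/(1-\gamma)$ combined with $(1-\gamma)\norm{v_*-v_{\pi_k}}_\infty\le 2\gamma\norm{v_*-v_{k-1}}_\infty$, and for API, the approximate policy-improvement bound plus the componentwise recursion $g_{k+1}\le\gamma P_{\pi_*}g_k+\frac{2\gamma}{1-\gamma}\epsilon\,\mathbf{1}$. Note also that your API decomposition $T_{\pi_*}v_*-T_{\pi_*}v_{\pi_k}+T_{\pi_*}v_{\pi_k}-T_{\pi_{k+1}}v_{\pi_k}+\cdots$ is precisely the technique the paper itself reuses in the proofs of Theorems~\ref{thm:api1} and~\ref{thm:api2}, so your route is essentially the same as the one underlying the paper.
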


The constant $\frac{2\gamma}{(1-\gamma)^2}$ can be very big, in
particular when $\gamma$ is close to $1$, and consequently the above
bound is commonly believed to be conservative for practical applications.
Interestingly, this very constant $\frac{2\gamma}{(1-\gamma)^2}$ appears in many works analyzing AVI
algorithms~\cite{singh94,Gordon95,TsitsiklisR96,guestrin2001,Guestrinjair2003,pineau2003point,Even-dar05y.:planning,ernst2005tree,Munos_SIAM07,Munos_JMLR08,petrik:2008,FaMuSz10},
API
algorithms~\cite{KakadeL02,munos2003,lagoudakis2003least,antos2008learning,farahmand2009regularized,acml2010,busoniu2011least-squares,Lazaric_JMLR2011_a,gabillon:hal-00644935,bertsekas2011,FaMuSz10,gheshlaghidpp}
and in one of their generalization~\cite{thierylpi},
suggesting that it cannot be improved.  Indeed, the bound (and the
$\frac{2\gamma}{(1-\gamma)^2}$ constant) are tight for
API~\cite[Example 6.4]{ndp}, and we will show in
Section~\ref{sec:tightness} -- to our knowledge, this has never been
argued in the literature -- that it is also tight for AVI. 

Even though the theory of optimal control states that there exists a
stationary policy that is optimal, the main contribution of our paper
is to show that looking for a \emph{non-stationary} policy (instead of
a stationary one) may lead to a much better performance bound. In
Section~\ref{sec:avi}, we will show how to deduce such a non-stationary
policy from a run of AVI. In Section~\ref{sec:api}, we
will describe two original policy iteration variations that compute
non-stationary policies.  For all these algorithms, we will prove that
we have a performance bound that can be reduced down to $\frac{2\gamma}{1-\gamma}\epsilon$.
This is a factor $\frac{1}{1-\gamma}$ better than the standard bound of Theorem~\ref{thm:classic-bound},
which is significant when $\gamma$ is close to $1$. 
Surprisingly, this will show that the problem of ``computing near-optimal non-stationary policies'' is much simpler than that
  of ``computing near-optimal stationary policies''.
Before we present these contributions, the next section
begins by precisely describing our setting.

\section{Background}
\label{sec:background}

We consider an infinite-horizon discounted Markov Decision Process~\cite{puterman,ndp} $(\mathcal S, \mathcal
A, P, r,\gamma)$, where $\mathcal S$ is a possibly infinite state
space, $\mathcal A$ is a finite action space, $P(ds'|s,a)$, for all
$(s,a)$, is a probability kernel on $\mathcal S$, $r : \mathcal
S\times\mathcal A \to \mathbb{R}$ is a reward function bounded in max-norm by $R_{\mathrm{max}}$, and $\gamma \in (0,1)$ is a discount factor. A
stationary deterministic policy $\pi:\mathcal S\to\mathcal A$ maps
states to actions. We write $r_\pi(s) = r(s,\pi(s))$ and $P_\pi(ds'|s)=P(ds'|s,\pi(s))$ for the immediate reward and the
stochastic kernel associated to policy $\pi$. The value $v_\pi$ of a
policy $\pi$ is a function mapping states to the expected discounted sum
of rewards received when following $\pi$ from any state: for all
$s\in\mathcal S$, 
$$
v_\pi(s) = \mathbb{E}\left[\sum_{t=0}^\infty \gamma^t
  r_\pi(s_t)\middle|s_0=s, s_{t+1}\sim
  P_\pi(\cdot|s_t)\right].
$$
The value $v_\pi$ is clearly
bounded by $V_{\mathrm{max}} = R_{\mathrm{max}}/(1-\gamma)$.
It is well-known that $v_\pi$ can be characterized as 
the unique fixed point of the linear Bellman operator
associated to a policy $\pi$: $T_\pi:v \mapsto r_\pi + \gamma P_\pi v$.
Similarly, the Bellman optimality operator $T:v \mapsto \max_\pi T_\pi v$ has as unique fixed point the optimal value $v_*=\max_\pi v_\pi$.  A policy $\pi$ is greedy w.r.t. a value
function $v$ if $T_\pi v = T v$, the set of such greedy policies is
written $\greedy{v}$. Finally, a policy $\pi_*$ is optimal, with value
$v_{\pi_*}=v_*$, iff $\pi_*\in\greedy{v_*}$, or equivalently
$T_{\pi_*}v_* = v_*$. 

Though it is known~\cite{puterman,ndp} that there
always exists a deterministic stationary policy that is optimal, we will, in this article, consider non-stationary policies and  now introduce related notations.
Given a sequence $\pi_1, \pi_2,\dots,\pi_k$ of $k$
  stationary policies (this sequence will be clear in the context we describe later), and for any $1 \le m \le k$, we will denote $\pi_{k,m}$ the \emph{periodic non-stationary policy} that takes the first action according to $\pi_k$, the second according to $\pi_{k-1}$, \dots, the $m^{th}$ according to $\pi_{k-m+1}$ and then starts again. Formally, this can be written as
  \[
  \pi_{k,m} = \pi_{k}\ \pi_{k-1}\ \cdots\ \pi_{k-m+1}\ \pi_k\ \pi_{k-1}\ \cdots \pi_{k-m+1} \cdots
  \]
It is straightforward to show that the value $v_{\pi_{k,m}}$ of this periodic non-stationary policy $\pi_{k,m}$ is the unique fixed point of the following operator:
\begin{align*} 
T_{k,m}=T_{\pi_k} ~ T_{\pi_{k-1}}  ~ \cdots ~ T_{\pi_{k-m+1}}.
\end{align*}
Finally, it will be convenient to introduce the following discounted kernel:
\begin{align*}
\Gamma_{k,m}=(\gamma P_{\pi_k}) (\gamma P_{\pi_{k-1}}) \cdots (\gamma P_{\pi_{k-m+1}}).
\end{align*}
In particular, for any pair of values $v$ and $v'$, it can easily be seen that $T_{k,m}v-T_{k,m}v'=\Gamma_{k,m}(v-v')$.

\section{Tightness of the performance bound of Theorem~\ref{thm:classic-bound}}
\label{sec:tightness}

The bound of Theorem~\ref{thm:classic-bound} is tight for API in the sense that there exists an MDP~\cite[Example 6.4]{ndp} for which the bound is reached. To the best of our knowledge, 
a similar argument has never been provided for AVI in the literature. It turns out that the MDP that is used for showing the tightness for API also applies to AVI. This is what we show in this section.
\begin{figure}
\begin{center}
\begin{tikzpicture}[->,>=stealth',shorten >=1pt,auto,node distance=2.8cm,
                    semithick]
\tikzstyle{every state}=[thick]

\node[state] (S1) {$1$};
\node[state] (S2) [right of=S1] {$2$};
\node[state] (S3) [right of=S2] {$3$};
\node        (S_) [right=1cm of S3] {$\quad\dots\quad$};
\node[state] (Sk) [right=1cm of S_] {$k$};
\node        (Sn) [right=1cm of Sk] {$\quad\dots\quad$};

\path (S1) edge [loop above] node {$0$} (S1)
      (S2) edge              node {$0$} (S1)
           edge [loop above] node {$-2\gamma\epsilon$} (S2)
      (S3) edge              node {$0$} (S2)
           edge [loop above] node {$-2(\gamma+\gamma^2)\epsilon$} (S3)
      (S_) edge              node {$0$} (S3)
      (Sk) edge              node {$0$} (S_)
           edge [loop above] node {$-2\frac{\gamma-\gamma^k}{1-\gamma}\epsilon$} (Sk)
      (Sn) edge              node {$0$} (Sk);
\end{tikzpicture}
\end{center}
\caption{The determinisitic MDP for which the bound of Theorem~\ref{thm:classic-bound} is tight for Value and Policy Iteration.}
\label{fig:tight}
\end{figure}
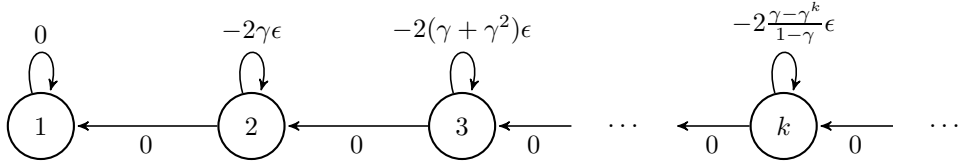
\begin{exmp} 
  \label{exmp:tight}
  Consider the $\gamma$-discounted deterministic MDP from~\cite[Example 6.4]{ndp} depicted on
  Figure~\ref{fig:tight}. It involves states $1,2,\dots$. In state $1$
  there is only one self-loop action with zero reward, for each state $i
  > 1$ there are two possible choices: either \emph{move} to state $i-1$ with
  zero reward or \emph{stay} with reward $r_i =
  -2\frac{\gamma-\gamma^{i}}{1-\gamma}\epsilon$ with $\epsilon \geq
  0$. Clearly the optimal policy in all states $i>1$ is to move to
  $i-1$ and the optimal value function $v_*$ is $0$ in all states.
  \par
  Starting with $v_0 = v_*$, we are going to show that for all iterations
  $k \ge 1$ it is possible to have a policy $\pi_{k+1} \in \greedy{v_k}$ which moves in
  every state but $k+1$ and thus is such that  $v_{\pi_{k+1}}(k+1)=\frac{r_{k+1}}{1-\gamma} =
  -2\frac{\gamma-\gamma^{k+1}}{(1-\gamma)^2}\epsilon$, which meets the bound
  of Theorem~\ref{thm:classic-bound} when $k$ tends to infinity.
  \par
  To do so, we assume that the following approximation errors are made at each
  iteration $k>0$:
  \[
  \epsilon_k(i) = \left\{\begin{array}{ll}
      -\epsilon &\text{\rm if } i=k\\
      \epsilon & \text{\rm if } i=k+1\\
      0&\text{\rm otherwise}
    \end{array}\right..
  \]
  With this error, we are now going to prove by induction on $k$ that for all $k \ge 1$, 
  \[
  v_k(i) = \left\{\begin{array}{ll}
    -\gamma^{k-1}\epsilon &\text{\rm if } i<k\\
    r_k/2-\epsilon & \text{\rm if } i=k\\
    -(r_k/2-\epsilon) & \text{\rm if } i=k+1\\
    0&\text{\rm otherwise}
  \end{array}\right..
  \]
  
Since $v_0 = 0$ the best action
  is clearly to move in every state $i \ge 2$ which gives $v_1 = v_0 +
  \epsilon_1 = \epsilon_1$ which establishes the claim for $k=1$.
  \par
  Assuming that our induction claim holds for $k$, we now show that it also holds for $k+1$.  

\par
For the \emph{move} action, write $q^{\mathrm{m}}_k$ its action-value function.
For all $i>1$ we have $q^{\mathrm{m}}_k(i) = 0+\gamma v_k(i-1)$, hence 
\[
q^{\mathrm{m}}_k(i) = \left\{\begin{array}{lll}
    \gamma(-\gamma^{k-1}\epsilon)&=-\gamma^k\epsilon & \text{if } i=2,\dots,k\\
     \gamma(r_k/2-\epsilon)&=r_{k+1}/2 &\text{if } i=k+1\\
    -\gamma(r_k/2-\epsilon)&=-r_{k+1}/2 & \text{if } i = k+2\\
    0 & &\text{otherwise}
\end{array}\right..
\]
\par
For the \emph{stay} action, write $q^{\mathrm{s}}_k$ its action-value
function. 
For all $i>0$ we have $q^{\mathrm{s}}_k(i) = r_i+\gamma v_k(i)$, hence 
\[
q^{\mathrm{s}}_k(i) = \left\{\begin{array}{lll}
    r_i+\gamma(-\gamma^{k-1}\epsilon) &= r_i-\gamma^k\epsilon& \text{if } i=1,\dots,k-1\\
    r_k+\gamma(r_k/2-\epsilon)&= r_k+r_{k+1}/2&\text{if } i=k\\
    r_{k+1}-r_{k+1}/2&=r_{k+1}/2 & \text{if } i = k+1\\
    r_{k+2}+\gamma 0 &=r_{k+2} & \text{if } i = k+2\\
    0 & &\text{otherwise}
\end{array}\right..
\]
\par
First, only the \emph{stay} action is available in state $1$, hence, since
$r_0=0$ and $\epsilon_{k+1}(1)=0$, we have $v_{k+1}(1) = q^{\mathrm{s}}_k(1) +\epsilon_{k+1}(1)= -\gamma^k\epsilon$,
as desired.  Second, since $r_i < 0$ for all $i>1$ we have
$q^{\mathrm{m}}_k(i) > q^{\mathrm{s}}_k(i)$ for all these states but
$k+1$ where $q^{\mathrm{m}}_k(k+1) = q^{\mathrm{s}}_k(k+1) =
r_{k+1}/2$. Using the fact that $v_{k+1} = \max (q^{\mathrm{m}}_k,
q^{\mathrm{s}}_k) + \epsilon_{k+1}$ gives the result for $v_{k+1}$.
\par
The fact that for $i>1$ we have $q^{\mathrm{m}}_k(i) \geq q^{\mathrm{s}}_k(i)$
with equality only at $i=k+1$ implies that there exists a policy
$\pi_{k+1}$ greedy for $v_k$ which takes the optimal \emph{move} action
in all states but $k+1$ where the \emph{stay} action has the same value,
leaving the algorithm the possibility of choosing the suboptimal
\emph{stay} action in this state, yielding a value $v_{\pi_{k+1}}(k+1)$,
matching the upper bound as $k$ goes to infinity.
\end{exmp}
\par
Since Example~\ref{exmp:tight} shows that the bound of
Theorem~\ref{thm:classic-bound} is tight, improving performance bounds imply to modify the algorithms. The following sections of the paper shows that considering non-stationary policies instead of stationary policies is an interesting path to follow.

\section{Deducing a non-stationary policy from AVI}
\label{sec:avi}

While AVI (Equation~\eqref{avi}) is usually considered as generating
a sequence of values $v_0,v_1,\dots,v_{k-1}$, it also implicitely produces
a sequence\footnote{A given sequence of value functions may induce many
  sequences of policies since more than one greedy policy may exist for
  one particular value function. Our results holds for all such possible
  choices of greedy policies.} of policies
$\pi_1,\pi_2,\dots,\pi_{k}$, where for $i=0,\dots,k-1$,
$\pi_{i+1}\in\greedy{v_i}$. Instead of outputing only the last
policy $\pi_{k}$, we here simply propose to output the periodic non-stationary
policy $\pi_{k,m}$ that loops over the last $m$ generated policies. The following theorem shows that it is indeed a good idea.
\begin{thm} 
\label{thm:avi}
For all iteration $k$ and $m$ such that $1 \le m \le k$, the loss of running the non-stationary policy
$\pi_{k,m}$ instead of the optimal policy $\pi_*$ satisfies:
\[
\norm{v_*-v_{\pi_{k,m}}}_\infty \le \frac{2}{1-\gamma^m} \left(
    \frac{\gamma-\gamma^k}{1-\gamma} \epsilon + \gamma^k \norm{v_*-v_0}_{\infty}
  \right).
  \]
\end{thm}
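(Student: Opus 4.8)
The plan is to bound the loss $v_* - v_{\pi_{k,m}}$ by relating it to the AVI error sequence through the operator $T_{k,m}$ whose fixed point is $v_{\pi_{k,m}}$. First I would write the decomposition
\[
v_* - v_{\pi_{k,m}} = \left(v_* - T_{k,m} v_{k-m}\right) + \left(T_{k,m} v_{k-m} - v_{\pi_{k,m}}\right),
\]
and handle the two pieces separately. For the second piece, since $v_{\pi_{k,m}} = T_{k,m} v_{\pi_{k,m}}$ and $T_{k,m}v - T_{k,m}v' = \Gamma_{k,m}(v-v')$, we get $T_{k,m} v_{k-m} - v_{\pi_{k,m}} = \Gamma_{k,m}(v_{k-m} - v_{\pi_{k,m}})$, which will let me peel off a factor $\gamma^m$ and eventually absorb this term into a geometric series $\sum_j \Gamma_{k,m}^j$ summing to the $\frac{1}{1-\gamma^m}$ prefactor (using $\norm{\Gamma_{k,m}^j w}_\infty \le \gamma^{jm}\norm{w}_\infty$).

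The heart of the argument is the first piece, $v_* - T_{k,m} v_{k-m}$, which I would telescope along the AVI iterations. Using $v_{i+1} = T v_i + \epsilon_{i+1} = T_{\pi_{i+1}} v_i + \epsilon_{i+1}$ (because $\pi_{i+1}$ is greedy for $v_i$) together with the fact that $T_{\pi_{i+1}} v_* \le T v_* = v_*$, I can compare $v_*$ with $T_{\pi_k} T_{\pi_{k-1}} \cdots T_{\pi_{k-m+1}} v_{k-m}$ step by step. Each step introduces one error term $\epsilon_i$ premultiplied by a product of the discounted kernels $\gamma P_{\pi_j}$, i.e. by some $\Gamma$-type operator; concretely I expect to obtain something like
\[
v_* - T_{k,m} v_{k-m} \le \sum_{i=k-m+1}^{k} \Gamma_{k,\,k-i}\left(v_* - T_{\pi_i} v_{i-1}\right) \quad\text{-ish},
\]
and then recursively unfold $v_* - T_{\pi_i} v_{i-1}$ back through all iterations down to $v_0$, collecting a sum of error terms with geometric discount weights plus the initial term $\gamma^k(v_* - v_0)$. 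Bounding in max-norm and summing the geometric series $\gamma + \gamma^2 + \cdots + \gamma^{k-1} \le \frac{\gamma - \gamma^k}{1-\gamma}$ (reindexed appropriately) yields the bracketed quantity $\frac{\gamma-\gamma^k}{1-\gamma}\epsilon + \gamma^k\norm{v_*-v_0}_\infty$.

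Combining both pieces: the first contributes the bracket directly, the second reintroduces a copy of the whole loss scaled by $\gamma^m$, so I would get an inequality of the form $\norm{v_*-v_{\pi_{k,m}}}_\infty \le (\text{bracket}) + \gamma^m \norm{v_*-v_{\pi_{k,m}}}_\infty$ — actually with a factor of $2$ appearing because the error enters both the "$v_*$ side" and the "$v_{\pi_{k,m}}$ side" of the comparison — and solving for the loss gives the factor $\frac{2}{1-\gamma^m}$. The main obstacle I anticipate is the careful bookkeeping of which discounted kernel product multiplies which error term when telescoping through the non-stationary composition $T_{\pi_k}\cdots T_{\pi_{k-m+1}}$ and then continuing past iteration $k-m$; getting the index ranges and the resulting geometric weights exactly right (so that they sum to $\frac{\gamma-\gamma^k}{1-\gamma}$ and not something slightly off) is where the real work lies. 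A clean way to organize this is to prove by induction on the number of unfolded steps a two-sided bound $-\eta \le v_* - v_{\pi_{k,m}}$-type relation, or equivalently to first establish an upper bound on $v_* - v_{\pi_{k,m}}$ and separately a matching lower bound via the symmetric chain of inequalities, then take absolute values.
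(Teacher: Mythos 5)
Your plan is essentially the paper's own proof: splitting at $T_{k,m}v_{k-m}$ is equivalent to the paper's split at $Tv_{k-1}$ (the telescoped error identity you describe is exactly the paper's Lemma~\ref{lem:vi-norm-rec}, contributing the $\frac{\gamma-\gamma^m}{1-\gamma}\epsilon$ term and the $\gamma^m$ factor), and the remaining ingredients --- the standard AVI propagation bound $\|v_*-v_j\|_\infty\le\gamma^j\|v_*-v_0\|_\infty+\frac{1-\gamma^j}{1-\gamma}\epsilon$ used twice and solving the resulting self-referential inequality to get the $\frac{2}{1-\gamma^m}$ prefactor --- are precisely the paper's argument. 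The accounting you anticipate does close (the two uses of the propagation bound give the factor $2$), so the proposal is correct modulo the routine bookkeeping you already flag.
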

When $m=1$ and $k$ tends to infinity, one exactly recovers the result of
Theorem~\ref{thm:classic-bound}. For general $m$, this new bound is a
factor $\frac{1-\gamma^m}{1-\gamma}$ better than the standard  bound of Theorem~\ref{thm:classic-bound}. The choice
that optimizes the bound, $m=k$, and  which consists in looping over all the policies generated \emph{from the very start}, leads to the following bound:
\[
  \norm{v_*-v_{\pi_{k,k}}}_\infty \le 2\left(
    \frac{\gamma}{1-\gamma}-\frac{\gamma^k}{1-\gamma^k} \right) \epsilon
  + \frac{2\gamma^k}{1-\gamma^k}\norm{v_*-v_0}_\infty,
\]
that tends to $\frac{2\gamma}{1-\gamma}\epsilon$ when $k$ tends to
$\infty$. 

The rest of the section is devoted to the proof of Theorem~\ref{thm:avi}.
An important step of our proof lies in the following lemma, that implies that for sufficiently big $m$, $v_k=T v_{k-1}+\epsilon_k$ is a rather good approximation (of the order $\frac{\epsilon}{1-\gamma}$) of the value $v_{\pi_{k,m}}$ of the non-stationary policy $\pi_{k,m}$ (whereas in general, it is a much poorer approximation of the value $v_{\pi_k}$ of the last stationary policy $\pi_k$).
\begin{lemma}
\label{lem:vi-norm-rec}
For all $m$ and $k$ such that $1 \le m \le k$, 
\begin{align}
  \|T v_{k-1}-v_{\pi_{k,m}}\|_\infty &\leq \gamma^m \|v_{k-m}-
  v_{\pi_{k,m}} \|_\infty + \frac{\gamma-\gamma^m}{1-\gamma}
  \epsilon. \nonumber
\end{align}
\end{lemma}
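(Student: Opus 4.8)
The plan is to unfold the AVI recursion exactly $m$ times and then isolate $v_{\pi_{k,m}}$ through its fixed-point characterization. Two elementary facts drive the argument. First, since $\pi_{i+1}\in\greedy{v_i}$ we have $Tv_i=T_{\pi_{i+1}}v_i$, so in particular $Tv_{k-1}=T_{\pi_k}v_{k-1}$. Second, each $T_\pi$ is affine with linear part $\gamma P_\pi$, hence $T_\pi(v+\epsilon)=T_\pi v+\gamma P_\pi\epsilon$. Starting from $Tv_{k-1}=T_{\pi_k}v_{k-1}$ and repeatedly substituting $v_i=Tv_{i-1}+\epsilon_i=T_{\pi_i}v_{i-1}+\epsilon_i$ for $i=k-1,k-2,\dots,k-m+1$, while pushing each error term through the linear parts of the operators sitting to its left, I expect to obtain the identity
\[
Tv_{k-1}=T_{k,m}\,v_{k-m}+\sum_{j=1}^{m-1}\Gamma_{k,j}\,\epsilon_{k-j},
\]
where the hypothesis $m\le k$ ensures all indices stay in range. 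Cleanest is to prove this by a short induction on $m$, the base case $m=1$ being exactly $Tv_{k-1}=T_{\pi_k}v_{k-1}$ with an empty sum.

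Next I would invoke the fixed-point identity $v_{\pi_{k,m}}=T_{k,m}v_{\pi_{k,m}}$ together with the property recorded in Section~\ref{sec:background}, namely $T_{k,m}v-T_{k,m}v'=\Gamma_{k,m}(v-v')$, to rewrite the above as
\[
Tv_{k-1}-v_{\pi_{k,m}}=\Gamma_{k,m}\bigl(v_{k-m}-v_{\pi_{k,m}}\bigr)+\sum_{j=1}^{m-1}\Gamma_{k,j}\,\epsilon_{k-j}.
\]
Then I take $\norm{\cdot}_\infty$ on both sides. Since each $\Gamma_{k,j}$ is $\gamma^j$ times a product of stochastic kernels, it satisfies $\norm{\Gamma_{k,j}x}_\infty\le\gamma^j\norm{x}_\infty$; combined with $\norm{\epsilon_{k-j}}_\infty\le\epsilon$ this yields
\[
\norm{Tv_{k-1}-v_{\pi_{k,m}}}_\infty\le\gamma^m\norm{v_{k-m}-v_{\pi_{k,m}}}_\infty+\epsilon\sum_{j=1}^{m-1}\gamma^j,
\]
and the geometric sum $\sum_{j=1}^{m-1}\gamma^j=\frac{\gamma-\gamma^m}{1-\gamma}$ gives precisely the claimed bound.

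The only real obstacle here is bookkeeping: getting the indices of the unfolding right, in particular making sure the error sum runs from $j=1$ to $m-1$ (the error $\epsilon_{k-m}$ that is buried inside $v_{k-m}$ is deliberately not peeled off, since we stop the recursion at $v_{k-m}$), and that the discount power $\gamma^j$ is correctly matched with the kernel $\Gamma_{k,j}$ multiplying $\epsilon_{k-j}$. Handling the unfolding as a formal induction on $m$, rather than an informal ``$\dots$'' chain, is the safest way to keep these indices under control.
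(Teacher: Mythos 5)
Your proposal is correct and follows essentially the same route as the paper: unfold the AVI recursion $m$ times (using $Tv_{i-1}=T_{\pi_i}v_{i-1}$ and linearity of each $T_\pi$) to get $Tv_{k-1}=T_{k,m}v_{k-m}+\sum_{j=1}^{m-1}\Gamma_{k,j}\epsilon_{k-j}$, subtract the fixed-point identity $v_{\pi_{k,m}}=T_{k,m}v_{\pi_{k,m}}$, and take max-norms using $\norm{\Gamma_{k,j}}_\infty=\gamma^j$ and the geometric sum. Your attention to the index bookkeeping (stopping at $v_{k-m}$ without peeling off $\epsilon_{k-m}$) matches the paper's Equation~\eqref{eq:vi-tpikvk1} exactly.
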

\begin{proof}[Proof of Lemma~\ref{lem:vi-norm-rec}]
The value of $\pi_{k,m}$ satisfies:
\begin{align}
\label{eq:vi-vpikm}
v_{\pi_{k,m}} = T_{\pi_k} T_{\pi_{k-1}} \cdots T_{\pi_{k-m+1}} v_{\pi_{k,m}}.
\end{align}
By induction, it can be shown that the sequence of values generated by
AVI satisfies:
\begin{equation}
\label{eq:vi-tpikvk1}
T_{\pi_k} v_{k-1} = T_{\pi_{k}} T_{\pi_{k-1}} \cdots T_{\pi_{k-m+1}}
v_{k-m} + \sum_{i=1}^{m-1} \Gamma_{k,i} \epsilon_{k-i}.
\end{equation}
By substracting Equations~\eqref{eq:vi-tpikvk1} and \eqref{eq:vi-vpikm}, one obtains:
\begin{align*}
  T v_{k-1} - v_{\pi_{k,m}} =  T_{\pi_k}v_{k-1} - v_{\pi_{k,m}} &= \Gamma_{k,m}
  (v_{k-m}-v_{\pi_{k,m}}) + \sum_{i=1}^{m-1} \Gamma_{k,i} \epsilon_{k-i}
\end{align*}
and the result follows by taking the norm and using the fact that for all $i$, $\|\Gamma_{k,i}\|_\infty=\gamma^i$.
\end{proof}

We are now ready to prove the main result of this section.
\begin{proof}[Proof of Theorem~\ref{thm:avi}]
Using the fact that $T$ is a contraction in max-norm, we have:
\begin{align*}
  \norm{v_* -v_k}_\infty &= \norm{v_* - T v_{k-1} + \epsilon_k}_\infty\\
  &\leq \norm{T v_* - T v_{k-1}}_\infty + \epsilon\\
  &\leq \gamma\norm{v_* - v_{k-1}}_\infty+\epsilon.
\end{align*}
Then, by induction on $k$, we have that for all $k \ge 1$,
\begin{align}
  \|v_*-v_k\|_\infty \le \gamma^k \|v_*-v_0\|_\infty +
  \frac{1-\gamma^k}{1-\gamma} \epsilon. \label{eq:vi-norm}
\end{align}
Using Lemma~\ref{lem:vi-norm-rec} and Equation~\eqref{eq:vi-norm} twice, we can conclude by observing that
\begin{align*}
  \|v_* - v_{\pi_{k,m}} \|_\infty &\le \|T v_* - T v_{k-1} \|_\infty + \|T v_{k-1} - v_{\pi_{k,m}} \|_\infty \\
  &  \le  \gamma \|v_*-v_{k-1}\|_\infty  + \gamma^m \|v_{k-m} - v_{\pi_{k,m}}  \|_\infty + \frac{\gamma-\gamma^m}{1-\gamma} \epsilon \\
  & \leq \gamma \left( \gamma^{k-1} \|v_*-v_0\|_\infty + \frac{1-\gamma^{k-1}}{1-\gamma} \epsilon\right)\\
  & ~~~~~~~ + \gamma^m \left( \| v_{k-m}-v_* \|_\infty  + \|v_* - v_{\pi_{k,m}} \|_\infty \right) + \frac{\gamma-\gamma^m}{1-\gamma} \epsilon \\
  & \le \gamma^k \|v_*-v_0\|_\infty + \frac{\gamma-\gamma^k}{1-\gamma} \epsilon \\
  & ~~~~~~~ +  \gamma^m \left( \gamma^{k-m}  \|v_*-v_0\|_\infty + \frac{1-\gamma^{k-m}}{1-\gamma} \epsilon + \|v_* - v_{\pi_{k,m}} \|_\infty  \right) + \frac{\gamma-\gamma^m}{1-\gamma} \epsilon \\
  & = \gamma^m \|v_* - v_{\pi_{k,m}} \|_\infty + 2\gamma^k \|v_*-v_0\|_\infty +
  \frac{2(\gamma-\gamma^k)}{1-\gamma}\epsilon\\
  &\le  \frac 2{1-\gamma^m}\left(\frac{\gamma-\gamma^k}{1-\gamma}\epsilon +\gamma^k \|v_*-v_0\|_\infty \right). \qedhere
\end{align*}
\end{proof}

\section{API algorithms for computing non-stationary policies}
\label{sec:api}

We now present similar results that have a Policy Iteration flavour. Unlike in the previous section where only the output of AVI needed to be changed, improving the bound for an API-like algorithm is slightly more involved. In this section, we describe and analyze two API algorithms that output non-stationary policies with improved performance bounds.

\paragraph{API with a non-stationary policy of growing period}

Following our findings on non-stationary policies AVI, we consider the following variation of API, where at each iteration, instead of computing the value of
the last stationary policy $\pi_k$, we compute that of the periodic non-stationary policy $\pi_{k,k}$
that loops over all the policies $\pi_1,\dots,\pi_k$ generated \emph{from the very start}:
\begin{align*}
v_{k} &\gets v_{\pi_{k,k}} + \epsilon_{k}\\
\pi_{k+1}& \gets \mbox{any element of } \greedy{v_k}
\end{align*}
where the initial (stationary) policy $\pi_{1,1}$ is chosen arbitrarily. Thus, iteration after iteration, the non-stationary policy $\pi_{k,k}$ is made of more and more stationary policies, and this is why we refer to it as having a growing period.
We can prove the following performance bound for this algorithm:
\begin{thm}
\label{thm:api1}
 After $k$ iterations, the loss of running the non-stationary policy $\pi_{k,k}$ instead of the optimal policy $\pi_*$ satisfies:
\[
\|v_*- v_{\pi_{k,k}}\|_\infty \le \frac{2 (\gamma-\gamma^k)}{1-\gamma}\epsilon + \gamma^{k-1} \| v_* - v_{\pi_{1,1}} \|_\infty + 2(k-1) \gamma^{k} V_{\mathrm{max}}.
\]
\end{thm}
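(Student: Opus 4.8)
The plan is to establish, for each $k\ge 1$, the one-step recursion
\[
\norm{v_* - v_{\pi_{k+1,k+1}}}_\infty \;\le\; \gamma\,\norm{v_* - v_{\pi_{k,k}}}_\infty + 2\gamma\epsilon + 2\gamma^{k+1}V_{\mathrm{max}},
\]
and then to unroll it starting from $\norm{v_*-v_{\pi_{1,1}}}_\infty$.

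First, I would exploit the greedy step. Writing $v_k = v_{\pi_{k,k}} + \epsilon_k$ with $\norm{\epsilon_k}_\infty\le\epsilon$, the fact that $\pi_{k+1}\in\greedy{v_k}$ gives $T_{\pi_{k+1}}v_k = Tv_k \ge T_{\pi_*}v_k$. Moving $\epsilon_k$ through the affine operators $T_{\pi_{k+1}}$ and $T_{\pi_*}$ costs at most $\gamma\epsilon$ each (because $P_\pi$ is a stochastic kernel), and using the identity $T_{\pi_*}v_{\pi_{k,k}} = v_* - \gamma P_{\pi_*}(v_*-v_{\pi_{k,k}})$, which follows from $v_*=T_{\pi_*}v_*$, one gets, componentwise and identifying scalars with constant functions,
\[
T_{\pi_{k+1}}v_{\pi_{k,k}} \;\ge\; v_* - \gamma P_{\pi_*}(v_*-v_{\pi_{k,k}}) - 2\gamma\epsilon .
\]

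Next, I would pass from this one-step image to the value of the new, longer-period policy. Since $v_{\pi_{k,k}}$ is the fixed point of $T_{\pi_k}\cdots T_{\pi_1}$, the operator $T_{k+1,k+1}=T_{\pi_{k+1}}T_{\pi_k}\cdots T_{\pi_1}$ maps $v_{\pi_{k,k}}$ exactly to $T_{\pi_{k+1}}v_{\pi_{k,k}}$. Combining this with the fact that $v_{\pi_{k+1,k+1}}$ is the fixed point of $T_{k+1,k+1}$, the identity $T_{k+1,k+1}v-T_{k+1,k+1}v' = \Gamma_{k+1,k+1}(v-v')$ with $\norm{\Gamma_{k+1,k+1}}_\infty=\gamma^{k+1}$, and the crude bound $\norm{v_{\pi_{k+1,k+1}}-v_{\pi_{k,k}}}_\infty\le 2V_{\mathrm{max}}$ (both terms being values of policies), I obtain
\[
v_{\pi_{k+1,k+1}} \;\ge\; T_{\pi_{k+1}}v_{\pi_{k,k}} - 2\gamma^{k+1}V_{\mathrm{max}} .
\]
Chaining the two displayed inequalities, then using $v_*-v_{\pi_{k,k}}\ge 0$ and that $P_{\pi_*}$ does not increase the sup-norm, yields the announced recursion. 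Unrolling it from $\norm{v_*-v_{\pi_{1,1}}}_\infty$ produces a $\gamma^{k-1}\norm{v_*-v_{\pi_{1,1}}}_\infty$ term plus two geometric sums, $\sum_{j=1}^{k-1}\gamma^{k-1-j}\cdot 2\gamma\epsilon = \frac{2(\gamma-\gamma^k)}{1-\gamma}\epsilon$ and $\sum_{j=1}^{k-1}\gamma^{k-1-j}\cdot 2\gamma^{j+1}V_{\mathrm{max}} = 2(k-1)\gamma^kV_{\mathrm{max}}$, which is exactly the claimed bound.

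The delicate point is the second step. In the AVI analysis, Lemma~\ref{lem:vi-norm-rec} controls $Tv_{k-1}$ by $v_{\pi_{k,m}}$ up to a sharp $O(\epsilon/(1-\gamma))$ term because the period $m$ is fixed; here, by contrast, the period of the non-stationary policy \emph{grows by one} at each iteration, so there is no fixed horizon to leverage and the comparison between $v_{\pi_{k+1,k+1}}$ and $v_{\pi_{k,k}}$ can only be made through the blunt $2V_{\mathrm{max}}$ bound. This is precisely what produces the extra $2(k-1)\gamma^kV_{\mathrm{max}}$ term; fortunately it vanishes as $k\to\infty$, so the asymptotic guarantee $\tfrac{2\gamma}{1-\gamma}\epsilon$ is preserved.
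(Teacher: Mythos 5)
Your proposal is correct and follows essentially the same route as the paper: the same key identity $T_{k+1,k+1}v_{\pi_{k,k}}=T_{\pi_{k+1}}v_{\pi_{k,k}}$, the same use of the greedy step with a $2\gamma\epsilon$ cost for transporting $\epsilon_k$, the same crude $\|\Gamma_{k+1,k+1}\|_\infty\cdot 2V_{\mathrm{max}}=2\gamma^{k+1}V_{\mathrm{max}}$ term, leading to the identical one-step recursion and unrolling. The only cosmetic difference is that you phrase it as a componentwise lower bound on $v_{\pi_{k+1,k+1}}$ rather than the paper's telescoping decomposition of $v_*-v_{\pi_{k+1,k+1}}$.
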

When $k$ tends to infinity, this bound tends to $\frac{2\gamma}{1-\gamma}\epsilon$, and is thus again a factor $\frac{1}{1-\gamma}$ better than the original API bound.
\begin{proof}[Proof of Theorem~\ref{thm:api1}]
Using the facts that $T_{k+1,k+1}v_{\pi_{k,k}}=T_{\pi_{k+1}} T_{k,k}v_{\pi_{k,k}}=T_{\pi_{k+1}}v_{\pi_{k,k}}$ and $T_{\pi_{k+1}}v_k \ge T_{\pi_*} v_k$ (since $\pi_{k+1} \in \greedy{v_k}$), we have:
\begin{align*}
& v_* - v_{\pi_{k+1,k+1}}\\
=&~ T_{\pi_*}v_* - T_{k+1,k+1} v_{\pi_{k+1,k+1}} \\
=&~ T_{\pi_*}v_* - T_{\pi_*}v_{\pi_{k,k}} + T_{\pi_*}v_{\pi_{k,k}} - T_{k+1,k+1}v_{\pi_{k,k}} + T_{k+1,k+1}v_{\pi_{k,k}} - T_{k+1,k+1} v_{\pi_{k+1,k+1}}  \\
=&~ \gamma P_{\pi_*}(v_*- v_{\pi_{k,k}}) +  T_{\pi_*}v_{\pi_{k,k}} - T_{\pi_{k+1}} v_{\pi_{k,k}} + \Gamma_{k+1,k+1} (v_{\pi_{k,k}} - v_{\pi_{k+1,k+1}}) \\
=&~ \gamma P_{\pi_*}(v_*- v_{\pi_{k,k}})  + T_{\pi_*}v_k - T_{\pi_{k+1}}v_k + \gamma (P_{\pi_{k+1}}-P_{\pi_*})\epsilon_k + \Gamma_{k+1,k+1} (v_{\pi_{k,k}} - v_{\pi_{k+1,k+1}}) \\
\le&~ \gamma P_{\pi_*}(v_*- v_{\pi_{k,k}})  + \gamma (P_{\pi_{k+1}}-P_{\pi_*})\epsilon_k + \Gamma_{k+1,k+1} (v_{\pi_{k,k}} - v_{\pi_{k+1,k+1}}).
\end{align*}
By taking the norm, and using the facts that $\|v_{\pi_{k,k}}\|_\infty \le V_{\mathrm{max}}$, $\|v_{\pi_{k+1,k+1}}\|_\infty \le V_{\mathrm{max}}$, and $\|\Gamma_{k+1,k+1}\|_\infty=\gamma^{k+1}$, we get:
\[
\| v_* - v_{\pi_{k+1,k+1}} \|_\infty \le \gamma \|v_*- v_{\pi_{k,k}}\|_\infty + 2\gamma \epsilon + 2\gamma^{k+1}V_{\mathrm{max}}.
\]
Finally, by induction on $k$, we obtain:
\[
\|v_*- v_{\pi_{k,k}}\|_\infty \le \frac{2 (\gamma-\gamma^k)}{1-\gamma}\epsilon + \gamma^{k-1} \| v_* - v_{\pi_{1,1}} \|_\infty + 2(k-1) \gamma^{k} V_{\mathrm{max}}. \qedhere
\]
\end{proof}

Though it has an improved asymptotic performance bound, the API algorithm we have
just described has two (related) drawbacks: 1) its finite iteration
bound has a somewhat unsatisfactory term of the form $2(k-1)\gamma^{k}V_{\mathrm{max}}$,
and 2) even when there is no error (when $\epsilon=0$), we cannot
guarantee that, similarly to standard Policy Iteration, it generates a sequence of policies of increasing
values (it is easy to see that in general, we do not have
$v_{\pi_{k+1,k+1}} \ge v_{\pi_{k,k}}$). These two points motivate the
introduction of another API algorithm.

\paragraph{API with a non-stationary policy of fixed period}

We consider now another variation of API parameterized by $m \ge 1$, that iterates as follows for $k\geq m$:
\begin{align*}
v_{k} &\gets  v_{\pi_{k,m}} + \epsilon_{k}\\
\pi_{k+1} &\gets \mbox{any element of } \greedy{v_k}
\end{align*}
where the initial non-stationary policy $\pi_{m,m}$ is built from a
sequence of $m$ arbitrary stationary policies $\pi_1,\pi_2,\cdots,\pi_m$. Unlike the previous API
algorithm, the non-stationary policy $\pi_{k,m}$ here only involves the last $m$ greedy stationary policies
instead of all of them, and is thus of fixed period. 
This is a strict generalization of the standard API algorithm, with which it coincides when $m=1$.
For this algorithm, we can prove the following performance bound:
\begin{thm}
\label{thm:api2}
For all $m$, for all $k \ge m$, the loss of running the non-stationary policy $\pi_{k,m}$ instead of the optimal policy $\pi_*$ satisfies:
\begin{align*}
  \|v_*-v_{\pi_{k,m}}\|_\infty \le \gamma^{k-m} \| v_* - v_{\pi_{m,m}} \|_\infty +
  \frac{2(\gamma-\gamma^{k+1-m})}{(1-\gamma)(1-\gamma^m)}\epsilon.
\end{align*}
\end{thm}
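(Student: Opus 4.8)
The plan is to mimic the structure of the proof of Theorem~\ref{thm:api1}, deriving a one-step contraction-type recursion for $\|v_* - v_{\pi_{k,m}}\|_\infty$ and then unrolling it by induction on $k$ starting from $k=m$. Let me think about what the recursion should look like.

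First, I want to relate $v_* - v_{\pi_{k+1,m}}$ to $v_* - v_{\pi_{k,m}}$. The key algebraic fact is that $T_{k+1,m} = T_{\pi_{k+1}} T_{\pi_k} \cdots T_{\pi_{k-m+2}}$, and $v_{\pi_{k+1,m}}$ is its fixed point, while $v_{\pi_{k,m}}$ is the fixed point of $T_{k,m} = T_{\pi_k} \cdots T_{\pi_{k-m+1}}$. Write $v_* = T_{\pi_*} v_*$ and insert telescoping terms:
\[
v_* - v_{\pi_{k+1,m}} = T_{\pi_*}v_* - T_{\pi_*} v_{\pi_{k,m}} + T_{\pi_*} v_{\pi_{k,m}} - T_{\pi_{k+1}} v_{\pi_{k,m}} + T_{\pi_{k+1}} v_{\pi_{k,m}} - T_{k+1,m} v_{\pi_{k+1,m}}.
\]
The first difference is $\gamma P_{\pi_*}(v_* - v_{\pi_{k,m}})$. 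For the second difference I would use $\pi_{k+1} \in \greedy{v_k}$, i.e. $T_{\pi_{k+1}} v_k \ge T_{\pi_*} v_k$, together with $v_k = v_{\pi_{k,m}} + \epsilon_k$, to bound $T_{\pi_*} v_{\pi_{k,m}} - T_{\pi_{k+1}} v_{\pi_{k,m}} \le \gamma(P_{\pi_{k+1}} - P_{\pi_*})\epsilon_k$, contributing a $2\gamma\epsilon$ term in norm. The third difference is the one requiring care: $T_{\pi_{k+1}} v_{\pi_{k,m}}$ versus $T_{k+1,m} v_{\pi_{k+1,m}} = T_{\pi_{k+1}} T_{\pi_k} \cdots T_{\pi_{k-m+2}} v_{\pi_{k+1,m}}$. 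Here, unlike in the growing-period case, $T_{k,m} v_{\pi_{k,m}} = v_{\pi_{k,m}}$ does not telescope cleanly because the policy indices are shifted. I expect this to be the main obstacle.

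To handle the third difference I would introduce the "almost fixed point" identity $v_{\pi_{k,m}} = T_{\pi_k} T_{\pi_{k-1}} \cdots T_{\pi_{k-m+1}} v_{\pi_{k,m}}$ and peel off the last operator, writing $T_{\pi_{k+1}} v_{\pi_{k,m}} - T_{\pi_{k+1}} T_{\pi_k} \cdots T_{\pi_{k-m+2}} v_{\pi_{k+1,m}}$ and iteratively comparing the operator strings $T_{\pi_k}\cdots T_{\pi_{k-m+2}}$ applied to $v_{\pi_{k+1,m}}$ against the unrolling of $v_{\pi_{k,m}}$ through $T_{\pi_k}\cdots T_{\pi_{k-m+2}} T_{\pi_{k-m+1}} v_{\pi_{k,m}}$. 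The mismatch is the single extra operator $T_{\pi_{k-m+1}}$ at the bottom, so after factoring $\Gamma_{k,m-1}$ (which has norm $\gamma^{m-1}$) one is left with a term $\gamma^{m-1}\|T_{\pi_{k-m+1}} v_{\pi_{k,m}} - v_{\pi_{k+1,m}}\|_\infty$ plus $\Gamma$-weighted error terms; bounding the remaining difference crudely by $2V_{\max}$ would reintroduce the bad $\gamma^{m}V_{\max}$-type term, so instead I would keep it and relate it back to $\|v_* - v_{\pi_{k,m}}\|_\infty$ and $\|v_* - v_{\pi_{k+1,m}}\|_\infty$ via triangle inequalities, absorbing the $\gamma^{m-1}$-or-$\gamma^m$-scaled copy of the left-hand side onto the left. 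Collecting everything, the target recursion should read roughly
\[
\|v_* - v_{\pi_{k+1,m}}\|_\infty \le \gamma \|v_* - v_{\pi_{k,m}}\|_\infty + \frac{2(\gamma - \gamma^m)}{1-\gamma}\epsilon + (\text{terms that telescope into the }\tfrac{1}{1-\gamma^m}\text{ factor}),
\]
after which a standard induction from $k=m$ with base case $\|v_* - v_{\pi_{m,m}}\|_\infty$ and summation of the geometric series $\sum \gamma^j$ yields the stated bound $\gamma^{k-m}\|v_* - v_{\pi_{m,m}}\|_\infty + \frac{2(\gamma - \gamma^{k+1-m})}{(1-\gamma)(1-\gamma^m)}\epsilon$.

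The delicate bookkeeping — and the place where I would be most careful — is matching the two operator products that differ by a cyclic shift of one policy, since that is exactly where Theorem~\ref{thm:api1} was forced to pay the $2(k-1)\gamma^k V_{\max}$ price; here the fixed period $m$ should let that extra term be re-expressed (via the near-fixed-point relation for $v_{\pi_{k,m}}$) as something that contributes only to the clean geometric factor $\frac{1}{1-\gamma^m}$ rather than a growing $k$-dependent remainder. I would double-check the exponents ($\gamma^{m-1}$ versus $\gamma^m$) and the exact form of the $\epsilon$-coefficient by verifying the $m=1$ specialization collapses to the classical API recursion $\|v_*-v_{\pi_{k+1}}\|_\infty \le \gamma\|v_*-v_{\pi_k}\|_\infty + 2\gamma\epsilon$, whose fixed point is $\frac{2\gamma}{(1-\gamma)^2}\epsilon$, consistent with Theorem~\ref{thm:classic-bound} and with the claimed bound at $m=1$, $k\to\infty$.
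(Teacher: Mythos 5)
Your setup matches the paper's: the same three-term telescoping of $v_*-v_{\pi_{k+1,m}}$, the same treatment of the first difference as $\gamma P_{\pi_*}(v_*-v_{\pi_{k,m}})$, and the same use of greediness against $\pi_*$ plus $v_k=v_{\pi_{k,m}}+\epsilon_k$ to get the $\gamma(P_{\pi_{k+1}}-P_{\pi_*})\epsilon_k$ term. The third difference also factors exactly as you hope (though with $\Gamma_{k+1,m}$, norm $\gamma^m$, not $\Gamma_{k,m-1}$): since $T_{\pi_{k+1}}v_{\pi_{k,m}}=T_{\pi_{k+1}}T_{k,m}v_{\pi_{k,m}}=T_{k+1,m}T_{\pi_{k-m+1}}v_{\pi_{k,m}}$, the leftover is $\Gamma_{k+1,m}\bigl(T_{\pi_{k-m+1}}v_{\pi_{k,m}}-v_{\pi_{k+1,m}}\bigr)$, i.e. $\Gamma_{k+1,m}\bigl(v_{\pi'_{k,m}}-v_{\pi_{k+1,m}}\bigr)$ where $\pi'_{k,m}=\pi_{k-m+1}\pi_{k,m}$ is the rotated policy.

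The genuine gap is in how you propose to dispose of this leftover. Relating $v_{\pi'_{k,m}}-v_{\pi_{k+1,m}}$ to $\|v_*-v_{\pi_{k,m}}\|_\infty$ and $\|v_*-v_{\pi_{k+1,m}}\|_\infty$ by triangle inequalities and then ``absorbing the $\gamma^m$-scaled copy of the left-hand side'' does not give the theorem: the best such bound is $v_{\pi'_{k,m}}-v_{\pi_{k+1,m}}\le v_*-v_{\pi_{k+1,m}}$, and after absorption the recursion becomes $\|v_*-v_{\pi_{k+1,m}}\|_\infty\le\frac{\gamma}{1-\gamma^m}\|v_*-v_{\pi_{k,m}}\|_\infty+\frac{2\gamma}{1-\gamma^m}\epsilon$, whose contraction coefficient $\frac{\gamma}{1-\gamma^m}$ exceeds $1$ for typical $\gamma$ and $m$ (e.g.\ any $\gamma\ge 1/2$), so it cannot be unrolled into the stated bound; trying instead to compare $v_{\pi'_{k,m}}$ to $v_*$ directly leaves an uncontrolled Bellman-residual term $v_*-T_{\pi_{k-m+1}}v_*$ involving a policy from $m$ iterations earlier. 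The missing idea is the paper's approximate policy improvement lemma (Lemma~\ref{lem:monot}): a \emph{second} use of greediness of $\pi_{k+1}$, this time against the old policy $\pi_{k-m+1}$ at $v_k$ (i.e.\ $T_{\pi_{k+1}}v_k\ge T_{\pi_{k-m+1}}v_k$), yields the self-referential inequality $v_{\pi'_{k,m}}-v_{\pi_{k+1,m}}\le\Gamma_{k+1,m}(v_{\pi'_{k,m}}-v_{\pi_{k+1,m}})+\gamma(P_{\pi_{k+1}}-P_{\pi_{k-m+1}})\epsilon_k$, hence $v_{\pi'_{k,m}}-v_{\pi_{k+1,m}}\le\frac{2\gamma}{1-\gamma^m}\epsilon$ after applying $(I-\Gamma_{k+1,m})^{-1}$. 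This bounds the leftover by the $O(\epsilon)$ quantity $\frac{2\gamma^{m+1}}{1-\gamma^m}\epsilon$ rather than by a $v_*$-gap, preserves the clean coefficient $\gamma$ in the recursion $\|v_*-v_{\pi_{k+1,m}}\|_\infty\le\gamma\|v_*-v_{\pi_{k,m}}\|_\infty+\frac{2\gamma}{1-\gamma^m}\epsilon$, and is exactly what your sketch lacks; without it (or an equivalent), the proof does not go through.
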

When $m=1$ and $k$ tends to infinity, we recover exactly the bound of Theorem~\ref{thm:classic-bound}. When $m > 1$ and $k$ tends to infinity, this bound coincides with that of Theorem~\ref{thm:avi} for our non-stationary version of AVI: it is a factor $\frac{1-\gamma^m}{1-\gamma}$ better than the standard bound of Theorem~\ref{thm:classic-bound}.

The rest of this section develops the proof of this performance bound.
A central argument of our proof is the following lemma, which shows that similarly to the standard API, our new algorithm has an (approximate) policy improvement property.
\begin{lemma}
\label{lem:monot}
At each iteration of the algorithm, the value $v_{\pi_{k+1,m}}$ of the non-stationary policy
$$
\pi_{k+1,m} ~=~ \pi_{k+1}\  \pi_{k}\  \dots\  \pi_{k+2-m}\  \pi_{k+1}\  \pi_{k}\  \dots\  \pi_{k-m+2} \dots
$$ 
cannot be much worse than the value $v_{\pi'_{k,m}}$ of the non-stationary policy
$$
\pi'_{k,m} ~=~ \pi_{k-m+1}\  \pi_{k}\  \dots\  \pi_{k+2-m}\  \pi_{k-m+1}\  \pi_{k}\  \dots\  \pi_{k-m+2} \dots
$$
in the precise following sense:
$$
v_{\pi_{k+1,m}} \ge v_{\pi'_{k,m}} - \frac{2\gamma}{1-\gamma^m} \epsilon.
$$
\end{lemma}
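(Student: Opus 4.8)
To prove Lemma~\ref{lem:monot}, the plan is to exploit a factorization of the two fixed-point operators that isolates their common block. Write $U := T_{\pi_k}\,T_{\pi_{k-1}}\cdots T_{\pi_{k+2-m}}$ for the composition of the $m-1$ innermost Bellman operators shared by $\pi_{k+1,m}$ and $\pi'_{k,m}$. Then the operator whose fixed point is $v_{\pi_{k+1,m}}$ is $T_{k+1,m}=T_{\pi_{k+1}}\,U$, the operator whose fixed point is $v_{\pi'_{k,m}}$ is $T_{\pi_{k-m+1}}\,U$, and $T_{k,m}=U\,T_{\pi_{k-m+1}}$.

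The first step is a fixed-point rearrangement establishing $v_{\pi'_{k,m}}=T_{\pi_{k-m+1}}v_{\pi_{k,m}}$, and hence also $v_{\pi_{k,m}}=U\,v_{\pi'_{k,m}}$. Setting $u:=T_{\pi_{k-m+1}}v_{\pi_{k,m}}$ and using that $v_{\pi_{k,m}}$ is the fixed point of $T_{k,m}=U\,T_{\pi_{k-m+1}}$ gives $U\,u=v_{\pi_{k,m}}$, so $(T_{\pi_{k-m+1}}U)\,u=T_{\pi_{k-m+1}}v_{\pi_{k,m}}=u$; thus $u$ is the unique fixed point of the operator of $\pi'_{k,m}$, i.e. $u=v_{\pi'_{k,m}}$.

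Next I would bring in the greedy step. Since $\pi_{k+1}\in\greedy{v_k}$ with $v_k=v_{\pi_{k,m}}+\epsilon_k$, we have $T_{\pi_{k+1}}v_k\ge T_{\pi_{k-m+1}}v_k$; expanding $T_{\pi_{k+1}}v_k=T_{\pi_{k+1}}v_{\pi_{k,m}}+\gamma P_{\pi_{k+1}}\epsilon_k$ and $T_{\pi_{k-m+1}}v_k=v_{\pi'_{k,m}}+\gamma P_{\pi_{k-m+1}}\epsilon_k$ (using Step~1), and bounding $\gamma(P_{\pi_{k-m+1}}-P_{\pi_{k+1}})\epsilon_k\ge -2\gamma\epsilon$ pointwise, yields $T_{\pi_{k+1}}v_{\pi_{k,m}}\ge v_{\pi'_{k,m}}-2\gamma\epsilon$, which by $v_{\pi_{k,m}}=U\,v_{\pi'_{k,m}}$ reads $T_{k+1,m}\,v_{\pi'_{k,m}}\ge v_{\pi'_{k,m}}-2\gamma\epsilon$. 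I would then iterate $T_{k+1,m}$: it is monotone and satisfies $T_{k+1,m}(v+c)=T_{k+1,m}v+\gamma^m c$ for constants $c$ (constants scale by $\|\Gamma_{k+1,m}\|_\infty=\gamma^m$), so a straightforward induction gives $T_{k+1,m}^{\,n}v_{\pi'_{k,m}}\ge v_{\pi'_{k,m}}-2\gamma\epsilon\sum_{j=0}^{n-1}\gamma^{jm}$; letting $n\to\infty$, the left-hand side converges to the fixed point $v_{\pi_{k+1,m}}$ and the right-hand side to $v_{\pi'_{k,m}}-\frac{2\gamma}{1-\gamma^m}\epsilon$, which is the claim.

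The step I expect to be the main obstacle is Step~1: recognizing that the value of the cyclically shifted periodic policy $\pi'_{k,m}$ is exactly one Bellman backup $T_{\pi_{k-m+1}}$ of $v_{\pi_{k,m}}$ is the non-obvious observation that makes the common-block factorization usable, and it is precisely what lets the greedy inequality be converted into a one-step decrease of $v_{\pi'_{k,m}}$ under $T_{k+1,m}$. Once that identity is available, the remaining monotone-contraction iteration is routine.
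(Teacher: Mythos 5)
Your proof is correct and follows essentially the same route as the paper: the key shift identity $v_{\pi'_{k,m}}=T_{\pi_{k-m+1}}v_{\pi_{k,m}}$ (which you additionally justify by a clean fixed-point argument), the greedy comparison $T_{\pi_{k+1}}v_k\ge T_{\pi_{k-m+1}}v_k$ expanded through $v_k=v_{\pi_{k,m}}+\epsilon_k$, and a geometric series in $\gamma^m$. The only difference is cosmetic: you conclude by iterating the $\gamma^m$-contraction $T_{k+1,m}$ from the sub-fixed-point inequality $T_{k+1,m}v_{\pi'_{k,m}}\ge v_{\pi'_{k,m}}-2\gamma\epsilon$, whereas the paper packages the same computation as a componentwise recursion solved with $(I-\Gamma_{k+1,m})^{-1}$.
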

The policy $\pi'_{k,m}$ differs from $\pi_{k+1,m}$ in that every $m$ steps, it chooses the oldest policy $\pi_{k-m+1}$ instead of the newest one $\pi_{k+1}$. Also $\pi'_{k,m}$ is related to $\pi_{k,m}$ as follows: $\pi'_{k,m}$ takes the first action according to $\pi_{k-m+1}$ and then runs $\pi_{k,m}$; equivalently, since $\pi_{k,m}$ loops over $\pi_k \pi_{k-1} \dots \pi_{k-m+1}$, $\pi'_{k,m}=\pi_{k-m+1}\pi_{k,m}$ can be seen as a 1-step right rotation of $\pi_{k,m}$. When there is no error (when $\epsilon=0$), this shows that the new policy $\pi_{k+1,m}$ is better than a ``rotation'' of $\pi_{k,m}$. When $m=1$, $\pi_{k+1,m}=\pi_{k+1}$ and $\pi'_{k,m}=\pi_{k}$ and we thus recover the well-known (approximate) policy improvement theorem for standard API~(see for instance~\cite[Lemma 6.1]{ndp}).

\begin{proof}[Proof of Lemma~\ref{lem:monot}]
Since $\pi'_{k,m}$ takes the first action with respect to $\pi_{k-m+1}$ and then runs $\pi_{k,m}$, we have $v_{\pi'_{k,m}}=T_{\pi_{k-m+1}} v_{\pi_{k,m}}$.
Now, since $\pi_{k+1} \in \greedy{v_k}$, we have $T_{\pi_{k+1}}v_k \ge T_{\pi_{k-m+1}} v_k$ and
\begin{align*}
v_{\pi'_{k,m}}-v_{\pi_{k+1,m}} &= T_{\pi_{k-m+1}} v_{\pi_{k,m}} - v_{\pi_{k+1,m}} \\
& = T_{\pi_{k-m+1}} v_k - \gamma P_{\pi_{k-m+1}}\epsilon_k - v_{\pi_{k+1,m}} \\
& \le T_{\pi_{k+1}} v_k - \gamma P_{\pi_{k-m+1}}\epsilon_k - v_{\pi_{k+1,m}} \\
& = T_{\pi_{k+1}} v_{\pi_{k,m}} + \gamma(P_{\pi_{k+1}}-P_{\pi_{k-m+1}})\epsilon_k - v_{\pi_{k+1,m}} \\
& = T_{\pi_{k+1}} T_{k,m} v_{\pi_{k,m}} - T_{k+1,m} v_{\pi_{k+1,m}} + \gamma(P_{\pi_{k+1}}-P_{\pi_{k-m+1}})\epsilon_k \\
& = T_{k+1,m} T_{\pi_{k-m+1}} v_{\pi_{k,m}} - T_{k+1,m} v_{\pi_{k+1,m}} + \gamma(P_{\pi_{k+1}}-P_{\pi_{k-m+1}})\epsilon_k \\
& = \Gamma_{k+1,m}(T_{\pi_{k-m+1}} v_{\pi_{k,m}} - v_{\pi_{k+1,m}}) + \gamma(P_{\pi_{k+1}}-P_{\pi_{k-m+1}})\epsilon_k \\
&= \Gamma_{k+1,m}(v_{\pi'_{k,m}} - v_{\pi_{k+1,m}}) + \gamma(P_{\pi_{k+1}}-P_{\pi_{k-m+1}})\epsilon_k.
\end{align*}
from which we deduce that:
\begin{align*}
v_{\pi'_{k,m}} - v_{\pi_{k+1,m}} \le (I-\Gamma_{k+1,m})^{-1}\gamma(P_{\pi_{k+1}}-P_{\pi_{k-m+1}})\epsilon_k
\end{align*}
and the result follows by using the facts that $\| \epsilon_k \|_\infty \le \epsilon$ and $\|(I-\Gamma_{k+1,m})^{-1}\|_\infty=\frac{1}{1-\gamma^m}$.
\end{proof}

We are now ready to prove the main result of this section.
\begin{proof}[Proof of Theorem~\ref{thm:api2}]
Using the facts that 1) $T_{k+1,m+1}v_{\pi_{k,m}}=T_{\pi_{k+1}} T_{k,m}v_{\pi_{k,m}}=T_{\pi_{k+1}}v_{\pi_{k,m}}$ and 2) $T_{\pi_{k+1}}v_k \ge T_{\pi_*} v_k$ (since $\pi_{k+1} \in \greedy{v_k}$),  we have for $k \ge m$,
\begin{align}
&v_* - v_{\pi_{k+1,m}} \nonumber\\
 =&~ T_{\pi_*}v_* - T_{k+1,m} v_{\pi_{k+1,m}} \notag\\
=&~ T_{\pi_*}v_* - T_{\pi_*}v_{\pi_{k,m}} + T_{\pi_*}v_{\pi_{k,m}} - T_{k+1,m+1}v_{\pi_{k,m}} + T_{k+1,m+1}v_{\pi_{k,m}} - T_{k+1,m} v_{\pi_{k+1,m}}   \notag\\
=&~ \gamma P_{\pi_*} (v_* -  v_{\pi_{k,m}}) + T_{\pi_*}v_{\pi_{k,m}} - T_{\pi_{k+1}}v_{\pi_{k,m}} + \Gamma_{k+1,m}(T_{\pi_{k-m+1}} v_{\pi_{k,m}} - v_{\pi_{k+1,m}}) \notag\\
\le&~  \gamma P_{\pi_*} (v_* -  v_{\pi_{k,m}}) + T_{\pi_*}v_k - T_{\pi_{k+1}}v_k + \gamma (P_{\pi_{k+1}}-P_{\pi_*})\epsilon_k + \Gamma_{k+1,m}(T_{\pi_{k-m+1}} v_{\pi_{k,m}} - v_{\pi_{k+1,m}}) \notag\\
 \le&~ \gamma P_{\pi_*} (v_* -  v_{\pi_{k,m}}) + \gamma (P_{\pi_{k+1}}-P_{\pi_*})\epsilon_k +  \Gamma_{k+1,m}(T_{\pi_{k-m+1}} v_{\pi_{k,m}} - v_{\pi_{k+1,m}}). \label{eq1}
\end{align}
Consider the policy $\pi'_{k,m}$ defined in Lemma~\ref{lem:monot}. Observing as in the beginning of the proof of Lemma~\ref{lem:monot} that $T_{\pi_{k-m+1}} v_{\pi_{k,m}}=v_{\pi'_{k,m}}$, Equation~\eqref{eq1} can be rewritten as follows:
\begin{align*}
v_* - v_{\pi_{k+1,m}} & \le \gamma P_{\pi_*} (v_* -  v_{\pi_{k,m}}) + \gamma (P_{\pi_{k+1}}-P_{\pi_*})\epsilon_k + \Gamma_{k+1,m} (v_{\pi'_{k,m}}-v_{\pi_{k+1,m}}).
\end{align*}
By using the facts that $v_* \ge v_{\pi_{k,m}}$, $v_* \ge v_{\pi_{k+1,m}}$ and Lemma~\ref{lem:monot}, we get
\begin{align*}
\|v_*-v_{\pi_{k+1,m}}\|_\infty  & \le \gamma \| v_* -  v_{\pi_{k,m}} \|_\infty + 2 \gamma \epsilon + \frac{\gamma^m(2\gamma \epsilon)}{1-\gamma^m} \\
& = \gamma \| v_* -  v_{\pi_{k,m}} \|_\infty + \frac{2\gamma}{1-\gamma^m}  \epsilon.
\end{align*}
Finally, we obtain by induction that for all $k \ge m$, 
\begin{align*}
\|v_*-v_{\pi_{k,m}}\|_\infty  \le \gamma^{k-m} \| v_* -  v_{\pi_{m,m}} \|_\infty + \frac{2(\gamma-\gamma^{k+1-m})}{(1-\gamma)(1-\gamma^m)}\epsilon. ~~~~~~\qedhere
\end{align*}
\end{proof}

\section{Discussion, conclusion and future work}
\label{sec:discussion}

We recalled in Theorem~\ref{thm:classic-bound} the standard
performance bound when computing an approximately optimal stationary
policy with the standard AVI and API algorithms. After arguing that this bound
is tight -- in particular by providing an original argument for AVI --
we proposed three new dynamic programming algorithms (one based on AVI
and two on API) that output non-stationary policies for which the
performance bound can be significantly reduced (by a factor
$\frac{1}{1-\gamma}$).

From a bibliographical point of view, it is the work of \cite{kakade}
that made us think that non-stationary policies may lead to better
performance bounds. In that work, the author considers problems with a
finite-horizon $T$ for which one computes \emph{non-stationary}
policies with performance bounds in $O(T \epsilon)$, and
infinite-horizon problems for which one computes \emph{stationary}
policies with performance bounds in
$O(\frac{\epsilon}{(1-\gamma)^2})$.  Using the informal equivalence of
the horizons $T \simeq \frac{1}{1-\gamma}$ one sees that
non-stationary policies look better than
stationary policies.  In \cite{kakade}, non-stationary policies
are only computed in the context of finite-horizon (and thus
non-stationary) problems;   the fact that non-stationary policies can
also be useful in an infinite-horizon stationary context is to our
knowledge completely new.

The best performance improvements are obtained when our algorithms
consider periodic non-stationary policies of which the period grows to
infinity, and thus require an infinite memory, which may look like a practical limitation. However, in two of the proposed algorithm, a parameter $m$
allows to make a trade-off between the quality of approximation
$\frac{2\gamma}{(1-\gamma^m)(1-\gamma)}\epsilon$ and the amount of
memory $O(m)$ required. In practice, it is easy to see that by
choosing $m=\left\lceil \frac{1}{1-\gamma} \right\rceil$, that is a
memory that scales linearly with the horizon (and thus the difficulty)
of the problem, one can get a performance bound of\footnote{With this
  choice of $m$, we have $m \ge \frac{1}{\log{1/\gamma}}$ and thus
  $\frac{2}{1-\gamma^m} \le \frac{2}{1-e^{-1}}\le 3.164$.}
$\frac{2\gamma}{(1-e^{-1})(1-\gamma)}\epsilon \le \frac{3.164
  \gamma}{1-\gamma}\epsilon $.




We conjecture that our asymptotic bound of
$\frac{2\gamma}{1-\gamma}\epsilon$, and the non-asymptotic bounds of
Theorems~\ref{thm:avi} and~\ref{thm:api2} are tight. The actual proof of this conjecture is left for
future work. Important recent works of the literature involve studying
performance bounds when the errors are controlled in $L_p$ norms
instead of
max-norm~\cite{munos2003,Munos_SIAM07,Munos_JMLR08,antos2008learning,farahmand2009regularized,acml2010,Lazaric_JMLR2011_a}
which is natural when supervised learning algorithms are used to
approximate the evaluation steps of AVI and API. Since our proof are
based on componentwise bounds like those of the pioneer works in this
topic~\cite{munos2003,Munos_SIAM07}, we believe that the extension
of our analysis to $L_p$ norm analysis is straightforward.  Last but not
least, an important research direction that we plan to follow consists
in revisiting the many implementations of AVI and API for building
stationary policies (see the list in the introduction), turn them into
algorithms that look for non-stationary policies and study them
precisely analytically as well as empirically.



\bibliographystyle{plain}
\bibliography{biblio}

\end{document}